\newtheorem{theorem}{Theorem}[section]
\newtheorem{conjecture}[theorem]{Conjecture}
\newtheorem{note}[theorem]{Note}
\newtheorem{definition}{Definition}[section]
\title{\LARGE \bf
	A New Similarity Function for Spectral Clustering with Application to Plant Phenotypic Data
}
\author{Kapil Ahuja$^{1}$, Mithun Singh$^{1}$, Kuldeep Pathak$^{1}$,  Milind B. Ratnaparkhe$^{2}$ 
	\thanks{*This work was not supported by any organization}
	\thanks{$^{1}$Math of Data Science \& Simulation (MODSS) Lab, Computer Science \& Engineering
		Indian Institute of Technology Indore, Indore, India {\tt\small kahuja@iiti.ac.in}}%
	\thanks{$^{2}$ICAR-Indian Institute of Soybean Research, Indore, India }%
}
\begin{document}

	\maketitle
	\thispagestyle{empty}
	\pagestyle{empty}

	\begin{abstract}		
	Clustering species of the same plant into different groups is an important step in developing new species of the concerned plant. Phenotypic (or physical) characteristics of plant species are commonly used to perform clustering. Hierarchical Clustering (HC) is popularly used for this task, and this algorithm suffers from low accuracy. In one of the recent works \cite{shastri2021probabilistically}, the authors have used the standard Spectral Clustering (SC) algorithm to improve the clustering accuracy. They have demonstrated the efficacy of their algorithm on soybean species.

	In the SC algorithm, one of the crucial steps is building the similarity matrix. A Gaussian similarity function is the standard choice to build this matrix. In the past, many works have proposed variants of the Gaussian similarity function to improve the performance of the SC algorithm, however, all have focused on the variance or scaling of the Gaussian. None of the past works have investigated upon the choice of base ``e'' (Euler's number) of the Gaussian similarity function (natural exponential function). 
	
	Based upon spectral graph theory, specifically the Cheeger's inequality, in this work we propose use of a base ``$a$'' exponential function as the similarity function. We also integrate this new approach with the notion of ``local scaling'' from one of the first works that experimented with the scaling of the Gaussian similarity function \cite{zelnik2004self}.
		
	Using an eigenvalue analysis, we theoretically justify that our proposed algorithm should work better than the existing one. With evaluation on $2376$ soybean species and $1865$ rice species, we experimentally demonstrate that our new SC is 35\% and 11\% better than the standard SC, respectively.
	\end{abstract}

	\section{INTRODUCTION}
	
	Phenotypic characteristics (or physical characteristics) of plant species are often used in clustering them into separate categories \cite{painkra2018clustering, sharma2014assessing}.  This is done so that plant species from different categories (or diverse plant species) could be selectively chosen for developing new species having better characteristics \cite{swarup2021genetic} (or called breeding). Hierarchical Clustering (HC) is one of the most commonly used clustering algorithms in this domain \cite{kahraman2014cluster}. This algorithm suffers from low accuracy issues. 
	
	In one of the recent works \cite{shastri2021probabilistically}, authors have used the standard Spectral Clustering (SC), considered to be one of the most accurate clustering algorithms, for plant phenotypic data and demonstrated improved accuracy. {\it In this work, we propose new variants of the SC algorithm and demonstrate that they perform substantially better than the earlier work}. 
	
	There are four main steps in the SC algorithm; (a) capturing of relationship between different data points using a similarity matrix, (b) calculation of a Laplacian matrix from the similarity matrix, (c) computing of eigenvectors of the Laplacian matrix, and (d) use of $k-$means algorithm on the computed eigenvectors to perform clustering. 
	
	In almost all works that have used the SC algorithm, a Gaussian function has been used to build the similarity matrix. Multiple variants of this Gaussian similarity function have also been proposed to improve the accuracy of SC \cite{zelnik2004self, park2018spectral, zhang2020spectral, favati2020construction}. The focus in all such works has been in changing the variance or scaling of the Gaussian. We have a two fold contribution here. 
	
	\begin{itemize}
		\item In this work, we change the base ``$e$'' (Euler's number) of the Gaussian similarity function (natural exponential function). We propose use of a base ``$a$'' exponential function as the similarity function. Using Cheeger's inequality that originates from spectral graph theory, we prove that for a simpler Laplacian matrix if ``$a$'' is greater than ``$e$'' that this would lead to better clustering. For a more practical Laplacian matrix, although we only conjecture this result (and not prove it), we do support this choice via analysis and experiments.    
		
		\item We also integrate our above new approach with the ``local scaling'' of the Gaussian similarity function from \cite{zelnik2004self}, which was the first work to focus on scaling of the Gaussian.
	\end{itemize}
	
	We justify our clustering choices as above with an eigenvalue analysis and extensive experiments on $2376$ soybean and $1865$ rice species. 
	\begin{itemize}
		\item We show that for soybean, although the standard SC is about $32.15$\% better than HC, our base “a” SC and base “a” locally scaled SC are $72.74$\% and $81.40$\% better than HC, respectively. In other words, our best SC is ${\bf 35\%}$ better than the standard SC.
		\item We also show that for rice, although standard SC is about $49.86$\% better than HC, our base “a” SC and base “a” locally scaled SC are $64.93$\% and $66.33$\% better than HC, respectively. In other words, our best SC is ${\bf 11}${\bf\%} better than the standard SC.   
	\end{itemize}

	The rest of the manuscript has five sections. Section \ref{sec:stdSC} gives the background. In Section \ref{sec:methods}, we delve into the methods used. Section \ref{sec: analysis} gives analysis. In Section \ref{sec:results}, we give results. Finally, Section \ref{sec:conclusion} gives the conclusion.
	
	\section{Background}\label{sec:stdSC}
	SC is one of the most popular modern clustering algorithms. It is simple to implement and can be solved efficiently by standard linear algebra software. Given a set of points $S =\{p_1, p_2, ...,p_n\}$ in $R^m$ that we want to cluster into $k$ subsets, the algorithm consists of below steps \cite{von2007tutorial}. This is the algorithm that has been used in the earlier work that we extend  \cite{shastri2021probabilistically}.
	
	\begin{itemize}
		\item  Form a similarity matrix A such that
		\begin{equation}\label{eq:similarity}
			A_i{}_j= e^{\left(-\frac{d_{p_ip_j}}{2\sigma^2}\right)},
		\end{equation}
		with $i$, $j$ $\in$ $\{1, ...,n\}$ and $A_{ii}=0$. Here, $d_{p_{i}p_{j}}$ denotes the distance between two points $p_i$ and $p_j$ and $\sigma$ defines the decay of the distance.
		\item Construct the normalized Laplacian matrix 
		\begin{equation}
			L=I- D^{-\frac{1}{2}}AD^{-\frac{1}{2}},
			\label{equation:laplacian}
		\end{equation}
		where $D$ is a diagonal matrix whose $(i,i)$ element is the sum of the elements of $A$'s  $i^t{}^h$ row.
		\item Let $e_1$, $e_2$ .., $e_k$ be the first $k$ eigenvectors of $L$. Then, form the matrix $X=[e_1,e_2 ....,e_k]$ by stacking the eigenvectors as columns of this matrix.
		\item  Form $Y$ by normalizing $X$'s rows to unit length, and then Cluster $Y$ using the $k-$Means clustering.
	\end{itemize}
	
	There are many ways to the distance between points $p_i$ and $p_j$ in ($\ref{eq:similarity}$), i.e., $d_{p_{i}p_{j}}$. Some common ones are Euclidean, Squared-Euclidean, and Correlation, which are given below.
	\begin{itemize}
		\item \textbf{Euclidean}: It represents the straight-line distance between two points in Euclidean space, and is calculated as follows:
		\begin{equation}
			d_{ij}= \sqrt{\sum_{l=1}^{m}(p_{i}^{l}-p_{j}^{l})^2},
		\end{equation}
		where $p_{i}^{l}$ and $p_{j}^{l}$ are the $l^{th}$ components of $p_i$ and $p_j$ data points.
		\item \textbf{Squared-Euclidean}: It is the square of the Euclidean distance, and is given as follows:
		\begin{equation}
			d_{ij}= \sum_{l=1}^{m}(p_{i}^{l}-p_{j}^{l})^2,
		\end{equation}
		with $p_{i}^{l}$ and $p_{j}^{l}$ are defined as above.
		\item \textbf{Correlation}: It captures the correlation between two non-zero vectors, and is expressed as follows:
		\begin{equation}
			d_{ij}= 1 - \frac{(p_i -\Bar{p_i})^t(p_j -\Bar{p_j})}{\sqrt{(p_i -\Bar{p_i})^t(p_i -\Bar{p_i})}\sqrt{(p_j -\Bar{p_j})^t(p_j -\Bar{p_j})}},
		\end{equation}
		where $\Bar{p_i}$ and $\Bar{p_j}$ represent the means of vectors $p_i$ and $p_j$, respectively, multiplied by a vector of ones, and the $t$ indicates the transpose operation.
	\end{itemize}
	
	\section{Methods}\label{sec:methods}
	
	Section \ref{sec:NewClusteringAlgo} introduces a novel modification to the standard SC, which involves using a base ``$a$'' exponential function, instead of the natural exponential function, to build the similarity matrix. We theoretically justify this choice as well. In Section \ref{sec:NewestClusteringAlgo}, we combine our above novelty with another improvement of local scaling in the SC algorithm.

	\subsection{Base ``$a$'' Spectral Clustering}\label{sec:NewClusteringAlgo}
	SC is based on spectral graph theory. To derive our new algorithm, we first revisit a few concepts from this domain. We form a graph from the given data as follows \cite{gharan2020cheeger}: (a) use data points as vertices and, (b) connect each point with the remaining points with an edge having weight equal to the corresponding element of similarity matrix A.
	\begin{definition}[Conductance \cite{gharan2015cheeger}]
		Given a graph $G =(V,E)$ with $V$ partitioned into $S$ and $\overline{S}$, the conductance of $S$ is defined as
		\begin{equation} 
			\phi(S) = \frac{|E(S,\overline{S})|}{Vol(S)},
		\end{equation}
		where numerator is the fraction of edges in $cut(S,\overline{S})$ and denominator is the sum of vertices in $S$. The conductance of $G$ is defined as 
		\begin{equation}
			\phi(G)= \min_{\substack{\text{vol}(S) \leq \frac{\text{vol}(V)}{2}}}(\phi(S)),
		\end{equation}
		or the smallest conductance among all sets with at most half of the total volume.
	\end{definition}
	\par \begin{theorem}[Cheeger's Inequality \cite{gharan2015cheeger}]
		For any graph $G$,
		\begin{equation}\label{Cheeger}
			\frac{\lambda_{2}}{2} \leq \phi(G) \leq \sqrt{2\lambda_{2}},
		\end{equation}
		where $\lambda_{2}$ is the $2^{nd}$ smallest eigenvalue of $L$ given by ($\ref{equation:laplacian}$).
	\end{theorem}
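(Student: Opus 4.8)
The plan is to prove the two inequalities separately, since the lower bound $\lambda_2/2 \le \phi(G)$ and the upper bound $\phi(G) \le \sqrt{2\lambda_2}$ rest on quite different ideas. The common starting point is the variational characterization of the second eigenvalue. Writing the normalized Laplacian as $L = I - D^{-1/2}AD^{-1/2}$ and substituting $y = D^{-1/2}x$, the Courant--Fischer min-max theorem gives
\begin{equation}
\lambda_2 = \min_{y\,:\,\sum_i d_i y_i = 0} \frac{\sum_{(i,j)\in E} w_{ij}(y_i-y_j)^2}{\sum_i d_i y_i^2},
\end{equation}
where $d_i$ is the weighted degree of vertex $i$ and $w_{ij} = A_{ij}$. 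This identity converts the spectral quantity $\lambda_2$ into combinatorial sums over edges, which is precisely what lets us connect it to the cut weight $|E(S,\overline{S})|$ and the volume $\mathrm{vol}(S)$ appearing in the definition of conductance.

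For the lower bound (the easy direction), I would take the set $S$ achieving $\phi(G)$ and build an explicit test vector from it, setting $y_i = 1/\mathrm{vol}(S)$ on $S$ and $y_i = -1/\mathrm{vol}(\overline{S})$ on $\overline{S}$, which satisfies the constraint $\sum_i d_i y_i = 0$. Plugging this into the Rayleigh quotient, the numerator collapses to $|E(S,\overline{S})|\bigl(1/\mathrm{vol}(S) + 1/\mathrm{vol}(\overline{S})\bigr)^2$ and the denominator to $1/\mathrm{vol}(S) + 1/\mathrm{vol}(\overline{S})$; using $\mathrm{vol}(S) \le \mathrm{vol}(V)/2$ to bound $1/\mathrm{vol}(S) + 1/\mathrm{vol}(\overline{S}) \le 2/\mathrm{vol}(S)$ then yields $\lambda_2 \le 2\phi(S) = 2\phi(G)$, which rearranges to the claim.

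The upper bound (the hard direction) is where the real work lies, and it is the step I expect to be the main obstacle. Here I would start from the eigenvector $v$ attaining $\lambda_2$, pass to $g = D^{-1/2}v$, and first reduce to a non-negative vector supported on at most half the total volume, by shifting $g$ by its volume-weighted median and keeping the positive part, an operation that only decreases the Rayleigh quotient. The core of the argument is a sweep cut (threshold rounding): ordering the vertices by the values of $g$ and considering the level sets $S_t = \{i : g_i > t\}$, an averaging argument over a randomly chosen threshold shows that some $S_t$ satisfies
\begin{equation}
\phi(S_t) \;\le\; \frac{\sum_{(i,j)\in E} w_{ij}\,|g_i^2 - g_j^2|}{\sum_i d_i g_i^2}.
\end{equation}
Finally, factoring $|g_i^2 - g_j^2| = |g_i - g_j|\,|g_i + g_j|$ and applying Cauchy--Schwarz, together with $(g_i+g_j)^2 \le 2(g_i^2+g_j^2)$, bounds the right-hand side by $\sqrt{2\,\mathcal{R}(g)} \le \sqrt{2\lambda_2}$, where $\mathcal{R}(g)$ denotes the Rayleigh quotient of $g$.

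The delicate points, and the reason the second direction is substantially harder than the first, are twofold: getting the support-volume reduction correct so that the set produced by the sweep is a legitimate candidate in the definition of $\phi(G)$ (i.e.\ has volume at most $\mathrm{vol}(V)/2$), and carrying out the threshold-averaging so that the ratio of expected cut weight to expected volume is controlled simultaneously rather than separately. These are the places where the constants and the direction of each inequality must be tracked with care, whereas the easy direction needs only a single explicit test vector.
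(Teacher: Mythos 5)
The paper does not actually prove this theorem: it is quoted verbatim from the cited lecture notes of Oveis Gharan and used as a black box, so there is no in-paper argument to compare yours against. Your outline is the standard proof of Cheeger's inequality and it is correct in all essentials: the easy direction via the two-valued test vector $y_i = 1/\mathrm{vol}(S)$ on $S$ and $-1/\mathrm{vol}(\overline{S})$ on $\overline{S}$ (whose Rayleigh quotient is exactly $|E(S,\overline{S})|\left(1/\mathrm{vol}(S)+1/\mathrm{vol}(\overline{S})\right) \le 2\phi(S)$), and the hard direction via median truncation, a sweep cut with the ratio-of-expectations averaging, and the Cauchy--Schwarz step on $|g_i^2-g_j^2|$. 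You also correctly identify the two genuinely delicate points (keeping the swept sets below half the volume, and bounding the ratio of the expected cut to the expected volume rather than each separately), which is where incorrect write-ups usually go wrong. The only caution worth adding is notational: the paper's statement of conductance is loose (it calls the numerator a ``fraction of edges'' and the denominator a ``sum of vertices''), and for the inequality to hold with the normalized Laplacian of equation (\ref{equation:laplacian}) both must be interpreted in the weighted sense --- total edge weight crossing the cut over total weighted degree --- exactly as you do in your variational formula.
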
 
	From the above theorem, we infer that $\phi(G)$ is close to zero (or G can be grouped into  $2$ clusters) if and only if $\lambda_{2}$ is close to zero. Note that $\lambda_{1}$ is always zero. This characterization carries over to higher multiplicities as well. $G$ can be grouped into $k$ clusters if and only if there are $k$ eigenvalues close to zero \cite{lee2014multiway}. 
	\par We propose using a base ``$a$'' exponential function instead of the natural exponential function in ($\ref{eq:similarity}$) of the standard spectral clustering algorithm. That is,
	\begin{equation}
		A_{ij} = a^{\left(-\frac{d_{p_ip_j}}{2\sigma^2}\right)},
		\label{eq:pow_fun}
	\end{equation}
	where ``$a$'' $>$ ``$e$''. This results in $A_{ij}$ of ($\ref{eq:pow_fun}$) being smaller than $A_{ij}$ of ($\ref{eq:similarity}$).
	
	\par \begin{theorem} The elements of non-normalized Laplacian matrix $L=D-A$ get smaller in absolute sense when we use ($\ref{eq:pow_fun}$) instead of ($\ref{eq:similarity}$), with ``$a$'' $>$ ``$e$'', to build A. Here, $D$ is the diagonal matrix whose $(i,i)$ element is the sum of $i^{th}$ row of $A$. Further, this leads to reduction in upper bound of eigenvalues of L.
	\end{theorem}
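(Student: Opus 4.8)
The plan is to split the argument into the two claims the statement makes: first the entrywise shrinkage of $L = D - A$, and then the resulting decrease of an upper bound on its spectrum. I would begin with the entrywise comparison. Write $A_{ij}^{(e)}$ and $A_{ij}^{(a)}$ for the off-diagonal entries produced by (\ref{eq:similarity}) and (\ref{eq:pow_fun}), respectively. Since every distance satisfies $d_{p_i p_j} \ge 0$ and $\sigma^2 > 0$, the exponent $-d_{p_i p_j}/(2\sigma^2)$ is non-positive; because $a > e > 1$ and raising a base to a fixed non-positive power is non-increasing in the base, we get $A_{ij}^{(a)} = a^{-d_{p_ip_j}/(2\sigma^2)} \le e^{-d_{p_ip_j}/(2\sigma^2)} = A_{ij}^{(e)}$, with equality only when $d_{p_i p_j} = 0$. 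Since both matrices have zero diagonal, every off-diagonal entry is non-negative and weakly smaller under base $a$.

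Next I would propagate this to $L = D - A$. Its off-diagonal entries are $-A_{ij}$, whose absolute values $A_{ij}$ shrink by the step above, while its diagonal entries are the degrees $D_{ii} = \sum_{j} A_{ij}$, each a sum of non-negative terms that individually shrink, so $D_{ii}^{(a)} \le D_{ii}^{(e)}$. This establishes that every entry of $L$ shrinks in absolute value, which is the first assertion.

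For the second assertion I would invoke an entrywise-monotone upper bound on the largest eigenvalue rather than tracking eigenvalues directly. Using the Laplacian quadratic form, for any $x$ one has $x^{t} L x = \sum_{i<j} A_{ij}(x_i - x_j)^2$, and the elementary inequality $(x_i - x_j)^2 \le 2(x_i^2 + x_j^2)$ gives $x^{t} L x \le 2\sum_i D_{ii}\, x_i^2 \le 2\bigl(\max_i D_{ii}\bigr)\|x\|^2$; by Rayleigh's principle this yields $\lambda_{\max}(L) \le 2\max_i D_{ii}$. (Equivalently, the Gershgorin discs of $L$ are centered at $D_{ii}$ with radius $D_{ii}$, giving the same bound.) Since each $D_{ii}$ decreases under base $a$, the quantity $2\max_i D_{ii}$ decreases, so this upper bound on the spectrum is reduced, as claimed.

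The main obstacle I anticipate is conceptual rather than computational: shrinking every entry of a symmetric matrix does \emph{not} in general shrink its eigenvalues, so the word ``upper bound'' must be taken literally. The argument only closes because the bound $2\max_i D_{ii}$ is expressed purely through the degrees, which provably decrease; I would therefore be careful to claim a reduction of this explicit bound rather than of the eigenvalues themselves. A secondary caveat worth flagging is that this theorem concerns the non-normalized $L = D - A$ and the top of its spectrum, whereas the clustering motivation via Cheeger's inequality (\ref{Cheeger}) concerns $\lambda_2$ of the normalized Laplacian; bridging the two is precisely the ``more practical'' case the introduction leaves as a conjecture, so I would keep this theorem's scope confined to the stated bound.
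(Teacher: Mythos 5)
Your proposal is correct and follows essentially the same route as the paper: entrywise monotonicity of $a^{-x}$ in the base gives the shrinkage of $A$, hence of $D$ and of $L=D-A$, and the second claim is closed by bounding the spectral radius with an entrywise-monotone quantity. The paper simply cites $\rho(L) \leq \|L\|$ without naming the norm, whereas your bound $\lambda_{\max}(L) \leq 2\max_i D_{ii}$ is exactly $\|L\|_{\infty}$ (the Gershgorin/max-row-sum bound), so you have in effect supplied the concrete monotone norm the paper leaves implicit; your caveat that the eigenvalues themselves need not decrease, only this stated upper bound, is precisely the right reading of the theorem.
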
 
	
	\begin{proof}
		The first part of the Theorem is obvious. Since elements of $A$ get smaller with the proposed change of base, the elements of $D$ also get smaller ($D$ is formed via elements of A). Thus, elements of $D-A$ or $L$ get smaller in the absolute sense.
		For the second part of the proof, we use the fact that the spectral radius of the matrix is bounded above by its norm or $\rho(L) \leq ||L|| $.
	\end{proof}
	
	\begin{conjecture}\label{conjecture1}
		The above theorem holds true when we change the non-normalized Laplacian matrix $L= D-A$ with the normalized Laplacian matrix $L=I- D^{-\frac{1}{2}}AD^{-\frac{1}{2}}$.
	\end{conjecture}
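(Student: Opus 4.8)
\emph{Proof proposal.} The plan is to re-run the two-part template of the preceding theorem, but to carry it out for the normalized adjacency matrix $M := D^{-\frac{1}{2}}AD^{-\frac{1}{2}}$ rather than for $A$ itself, since the normalized Laplacian is $L = I - M$ and its diagonal (all ones, because $A_{ii}=0$) is independent of the base. The entire effect of the base change is therefore carried by the off-diagonal entries $-M_{ij}$ and by the spectrum of $M$. The key reparametrization I would use is $a^{w} = (e^{w})^{\ln a}$: writing $\beta := \ln a > 1$ and noting that every $A_{ij}$ of (\ref{eq:similarity}) lies in $(0,1]$ (its exponent is non-positive), the base-$a$ similarity is the entrywise power $A_{ij}^{(a)} = \big(A_{ij}^{(e)}\big)^{\beta}$ with $\beta>1$. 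This immediately shrinks each numerator $A_{ij}$ and each degree $d_i=\sum_k A_{ik}$, recovering the first half of the non-normalized theorem for free.

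For the first part of the conjecture (entries of $L$ shrink in absolute value) I would aim to show $M_{ij}^{(a)} \le M_{ij}^{(e)}$, which after cancellation is equivalent to the scalar inequality
\begin{equation}
\big(A_{ij}^{(e)}\big)^{\beta-1} \;\le\; \sqrt{\frac{d_i^{(a)}}{d_i^{(e)}}\,\frac{d_j^{(a)}}{d_j^{(e)}}}\,.
\end{equation}
The left-hand side measures how fast the numerator shrinks; the right-hand side, how fast the geometric mean of the two incident degrees shrinks. Both sides lie in $(0,1]$, so the task reduces to showing the numerator shrinks at least as fast as the denominator. The natural tool is a power-mean (Jensen/Karamata) argument on the degree sums: writing $d_i^{(a)}/d_i^{(e)} = \sum_k \frac{A_{ik}^{(e)}}{d_i^{(e)}}\big(A_{ik}^{(e)}\big)^{\beta-1}$ as a similarity-weighted average of the quantities $\big(A_{ik}^{(e)}\big)^{\beta-1}$, the required bound compares this average against its single value at $k=j$.

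For the second part (reduction of the eigenvalue upper bound), the estimate $\rho(L)\le\|L\|$ used in the preceding theorem still applies, but it must be chosen carefully. The $2$-norm is useless here, because $M$ is nonnegative with $D^{\frac12}\mathbf{1}$ a Perron eigenvector of eigenvalue $1$, so $\rho(M)=\|M\|_2=1$ for \emph{every} base and the spectrum of $L$ always sits in $[0,2]$. I would instead bound the top eigenvalue $1-\mu_{\min}(M)$ through $\|L\|_\infty$ (a row-sum bound on $M$) or through a Rayleigh-quotient estimate for the most negative eigenvalue of $M$, and then feed the entrywise contraction of part one into that bound to conclude that the upper end of the spectrum decreases.

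The hard part, and the reason this should remain a conjecture, is the coupling that normalization introduces. The displayed scalar inequality can fail outright when a single strong similarity dominates a vertex's degree: if $A_{ij}^{(e)}$ is close to $1$ while the other entries of rows $i$ and $j$ are small, the left-hand side stays near $1$ while both degree ratios on the right drop strictly below $1$, so $M_{ij}$ actually \emph{grows} under the base change. In the weighted-average picture this is exactly the statement that the bound holds for below-average (weak) edges but reverses for dominant ones. I therefore expect clean entrywise monotonicity only under a degree-balance or regularity hypothesis, and I regard the most promising route to the conjecture as proving the weaker-but-sufficient spectral statement — that the relevant eigenvalues of $L$ contract — directly via a Rayleigh-quotient comparison, rather than routing it through entrywise domination.
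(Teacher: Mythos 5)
The paper offers no proof of this statement: it is explicitly left as a conjecture (``We are unable to prove this theoretically''), supported only by the numerically computed eigenvalue plots of Section~\ref{subsec:EigenvalueAnalysis}. So there is no paper proof to compare your argument against step by step. Your proposal is likewise not a proof, and you say so honestly --- but your diagnosis is correct and in fact sharper than anything the paper provides. The reparametrization $A^{(a)}_{ij} = \bigl(A^{(e)}_{ij}\bigr)^{\ln a}$ is right; the reduction of entrywise monotonicity of $M = D^{-\frac12}AD^{-\frac12}$ to your displayed degree-ratio inequality is right; and your failure scenario is genuine: if row $i$ has one entry near $1$ and many entries equal to a small $\epsilon$, then $d_i^{(a)}/d_i^{(e)} = \bigl(1+(n-2)\epsilon^{\beta}\bigr)/\bigl(1+(n-2)\epsilon\bigr)$ can be driven far below $1$ while $\bigl(A^{(e)}_{ij}\bigr)^{\beta-1}$ stays near $1$, so $M_{ij}$ \emph{increases}. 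Combined with the observation that the diagonal of the normalized Laplacian is identically $1$ for every base, this shows the first half of the conjecture (entries of $L$ shrink in absolute value) is false as a general matrix statement. Your second point --- that $D^{\frac12}\mathbf{1}$ is a Perron eigenvector of $M$ with eigenvalue $1$ for every base, so $\rho(L)\le\|L\|_2$ yields only the base-independent conclusion that the spectrum of $L$ lies in $[0,2]$ --- is equally correct, and it exposes precisely why the norm-bound template of the preceding theorem cannot be transplanted to the normalized case.

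One further point worth flagging: even if an eigenvalue upper bound could be reduced, that is not what the paper's motivation requires. The Cheeger argument concerns $\lambda_2$, the second \emph{smallest} eigenvalue, and what Figures~\ref{SoybeanEigenGap} and~\ref{RiceEigenGap} actually exhibit is the bottom of the spectrum moving toward zero. A Rayleigh-quotient comparison for the $k$ smallest eigenvalues under the entrywise power map is the right target, as you suggest in closing, but you have not carried it out, and the dominant-edge phenomenon you identified indicates that it too will need hypotheses on the data (some form of degree balance or regularity). The correct conclusion is that the statement remains a conjecture: your proposal does not close it, but it correctly explains why the obvious routes fail, which is more than the paper itself does.
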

	We are unable to prove this theoretically. However, this holds true experimentally. We demonstrate in the analysis section later in this paper that the change of the base as discussed in the above conjecture leads to a reduction in the eigenvalues of $L$. 
	
	Thus, from the Cheegers's Inequality (\ref{Cheeger}), we infer that we should get a better clustering when we use base ``$a$'' exponential function instead of the natural exponential function in building the similarity matrix (with ``$a$'' greater than ``$e$''). This is supported by experiments in the results section.
	
	\begin{note}
		From Fig. 1 we can see that the function value decreases exponentially when we go from $3^{-x}$ to  $3000^{-x}$. Therefore, if we continue to increase the base value of ``a'' in the above discussion infinitely, then the value of elements in the similarity matrix $A$ will tend to decrease very slowly. Hence, if the base value ``a'' is increased indefinitely, the quality of clustering will have infinitesimally small improvement.  
	\end{note}
	
	\begin{figure}[h!]
		\centering
		\includegraphics[width=.9\linewidth]{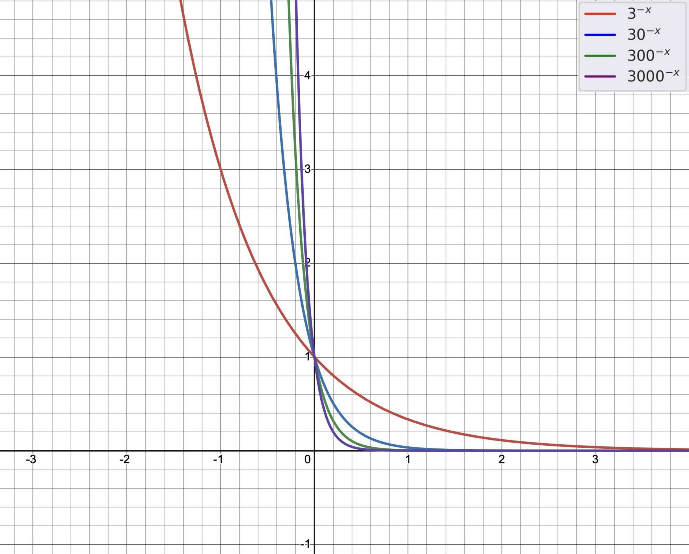}
		\caption{Exponential decay of $a^{-x}$ for different base values.}
		\label{fig:note}
	\end{figure}
	
	\subsection{Base ``$a$'' Locally Scaled Spectral Clustering}\label{sec:NewestClusteringAlgo}
	Next, to further improve our clustering, we depart from the conventional practice of utilizing a global scaling factor ($\sigma$) in (\ref{eq:pow_fun}). Instead, we adopt the concept of a local scaling factor specific to each data point, as proposed by \cite{zelnik2004self}. Now, the similarity between the two points is defined as
	\begin{equation}
		A_{ij}= a^{\left(-\frac{d_{p_{i}p_{j}}}{\sigma_i \sigma_j}\right)}.
		\label{eq: local sim}
	\end{equation}
	
	The determination of the local scale $\sigma_i$ involves analyzing the local statistics within the neighborhood of a given point. We employ a simple yet effective approach for scale selection. That is,
	\begin{equation}
		\sigma_i = d_{p_i p_K},
	\end{equation}
	where $p_K$ is the $K^\text{th}$ neighbor of $p_i$. The selection of K is independent of the scale and based upon the data dimensionality. 
	
	In the analysis section, we show that this choice of similarity function leads to further reduction in eigenvalues of $L$ (more than just use of base ``a'' exponential function). 
	
	Thus, again by Cheegers's Inequality (\ref{Cheeger}), this choice of the similarity function should lead to better clustering than both the natural exponential function and base ``a'' exponential function clustering. This is again supported by experiments in the results section.

	\section{Analysis}\label{sec: analysis}
	Few settings of our algorithms from previous sections are as follows:  (a) The best value of ``$a$'' (the base of the exponential function used to build the similarity matrix) for us turns to be ``$30$''. (b) The most fitting value of $K$ (neighbor of a point in local scaling) comes to $180$.
	
	Below, in Section \ref{subsec:Data} we describe the plant phenotypic data we test upon, i.e., for soybean and rice. Section \ref{subsec:Normalize} discusses the normalization of data. Finally, in Section \ref{subsec:EigenvalueAnalysis} we do eigenvalue analysis to justify the use of base ``$30$'' as well as local scaling in SC.
	
	\subsection{Data Description}\label{subsec:Data}
	As mentioned in the introduction, our technique is applicable to any plant dataset. However, here we focus on phenotypic data from soybean and rice species. The soybean dataset, sourced from Indian Institute of Soybean Research, Indore, India, consists of 29 different phenotypic (or physical) traits for 2376 soybean species \cite{Gireesh_Husain_Shivakumar_Satpute_Kumawat_Arya_Agarwal_Bhatia_2017}. Among these, we consider the following eight traits that are most important for higher yield: Early Plant Vigor (EPV), Plant Height (PH), Number of Primary Branches (NPB), Lodging Score (LS), Number of Pods Per Plant (NPPP), 100 Seed Weight (SW), Seed Yield Per Plant (SYPP) and Days to Pod Initiation (DPI). Among these, EPV and LS are categorical traits, while the rest are numerical. Table \ref{tab:soybeandata} provides a snapshot of the phenotypic data for a few soybean varieties.
	
	\begin{table}[h]
		\centering
			\renewcommand{\arraystretch}{1.3}
		\small 
		\begin{tabular}{|p{0.5cm}|p{0.55cm}|p{0.4cm}|p{0.45cm}|p{1.2cm}|p{0.65cm}|p{0.4cm}|p{0.65cm}|p{0.4cm}|}
			\hline
			\textbf{Sr. No.} & \textbf{EPV} & \textbf{PH} & \textbf{NPB} & \textbf{LS} & \textbf{NPPP} & \textbf{SW} & \textbf{SYPP} & \textbf{DPI}\\
			
			\hline
			1 & Poor	 &54	&6.8	&Moderate	&59.8	&6.5 & 2.5 & 65\\
			\hline
			2 &  Poor	 &67&3.4	&Severe	&33	&6.2 & 3.9 & 64\\
			\hline
			--& -- & -- & -- & -- & -- & -- & --& --\\
			\hline
			2376 & Very Good	&89.6& 5	&Severe	&32.6	&7.3& 3.4&62 \\
			\hline
		\end{tabular}
		\caption{Phenotypic data of soybean plant.}
		\label{tab:soybeandata}
	\end{table}

	In addition to the soybean dataset, we also use a rice dataset obtained from The International Rice Information System (IRIS) (\url{www.iris.irri.org})- a platform for meta-analysis of rice crop plant data. It consists of 12 phenotypic (or physical) characteristics of $1865$ rice species. A snapshot of this data is given in the Table \ref{tab:combinedricedata}.
	
	\begin{table*}[h]
		\centering
			\renewcommand{\arraystretch}{1.3}
		\small 
		\begin{tabular}{|p{0.5cm}|p{1.1cm}|p{1.1cm}|p{1.4cm}|p{0.8cm}|p{0.8cm}|p{1.4cm}|p{1.2cm}|p{1.2cm}|p{0.8cm}|p{0.7cm}|p{1.2cm}|p{0.8cm}|}
			\hline
			\textbf{Sr. No.} & \textbf{Cudicle Reproduction} & \textbf{Cultural Reproduction} & \textbf{Cuneiform Reproduction} & \textbf{Grain Length} & \textbf{Grain Width} & \textbf{Grain weight per 100 seed} & \textbf{HDG 80HEAD} & \textbf{Lightness of Color} & \textbf{Leaf Length} & \textbf{Leaf Width} & \textbf{Plant Post Harvest Traits} & \textbf{Stem Height}\\
			\hline
			1 & 5	&147	 &16	&8.7	&3.1	&2.9	& 102	 &25	&72	&1.1	&29 &54	\\
			\hline
			2 & 6	&150	 &27	&7.1	&3.3	&2.1	& 123	 &20	&73	&1.5	&27 &45	\\
			\hline
			--& -- & -- & -- & -- & -- & -- & -- & -- & -- & -- & -- & --\\
			\hline
			1865 & 3	&56	&16	&7.7	&3.4	&2.8	& 69	&10&	31	&1	&16 &23	\\
			\hline
		\end{tabular}
		\caption{ Phenotypic data of rice plant.}
		\label{tab:combinedricedata}
	\end{table*}

	\subsection{Normalization}\label{subsec:Normalize}
	Let us consider a dataset consisting of $n$ species with $m$ distinct traits. We begin by normalizing the traits as follows \cite{shastri2021probabilistically}:
	\begin{equation}
		(\chi_j){}_i = \frac{(x_j){}_i -\min(x_j)}{\max(x_j)-\min(x_j)}.
	\end{equation}
	Here,$(\chi_j){}_i$ and $(x_j){}_i$ are the normalized and the actual value of the $j^t{}^h$ trait for the $i^t{}^h$ species, respectively. Next, we represent each species as 
	\[
	p_i =
	\begin{bmatrix}
		(\chi_1){}_i \\
		(\chi_2){}_i \\
		\vdots \\
		(\chi_m){}_i \\
	\end{bmatrix},
	\]
	for $i= 1, 2,..,n$.\\
	\vspace{-0.5cm}
	\begin{figure}[h!]
		\includegraphics[width=0.5\textwidth]{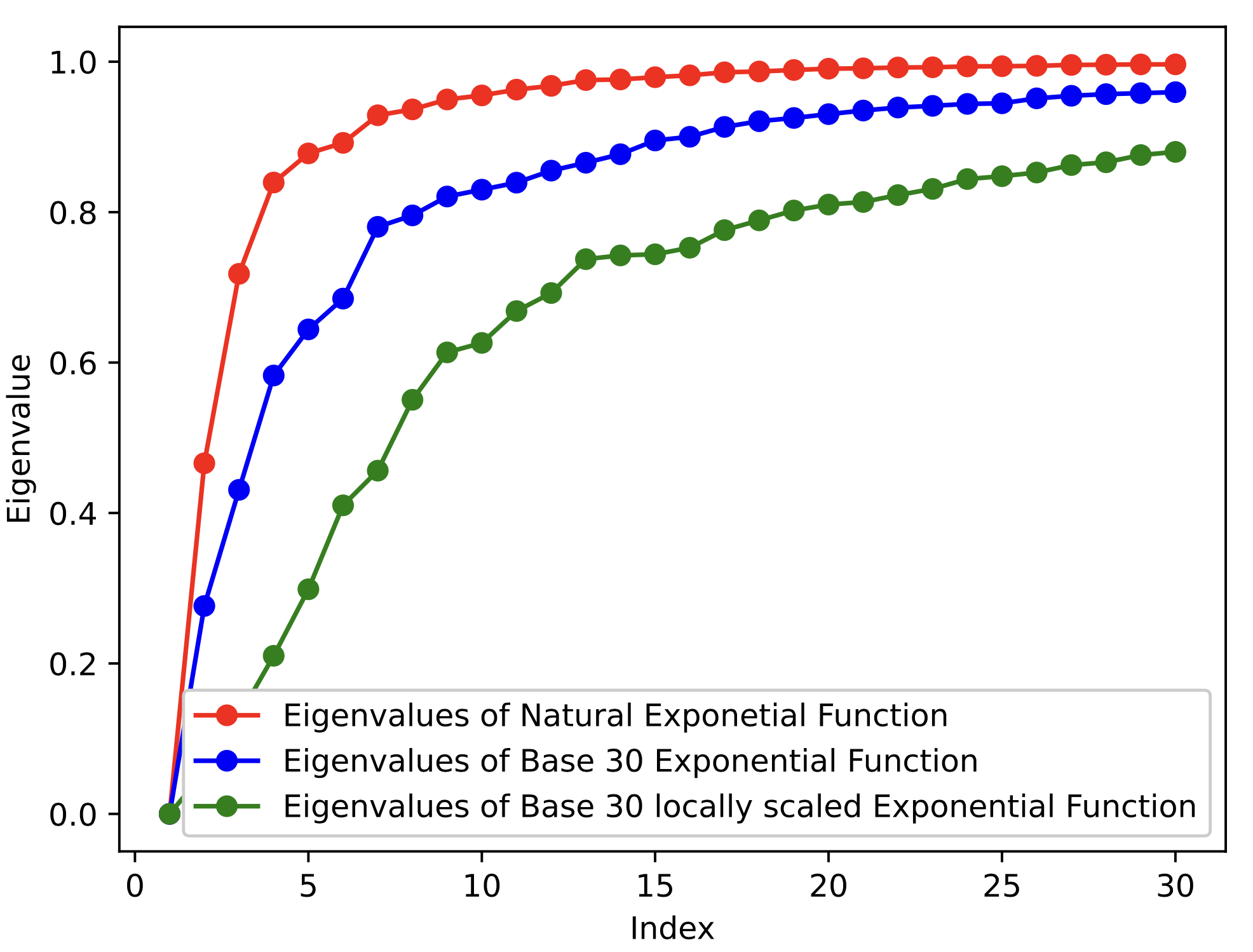}
		\caption{Soybean: First 30 eigenvalues obtained using natural exponential function, base ``$30$'' exponential function, and  base ``$30$'' locally scaled exponential function for building the similarity matrix.}
		\label{SoybeanEigenGap}
	\end{figure}
	
	\subsection{Eigenvalue Analysis}\label{subsec:EigenvalueAnalysis}
	Fig. \ref{SoybeanEigenGap} and \ref{RiceEigenGap} plot the eigenvalues for soybean and rice, respectively. These are first $30$ smallest eigenvalues of the Laplacian matrix obtained from similarity matrix built using natural exponential function, base ``$30$'' exponential function, and  base ``$30$'' locally scaled exponential function.
	
	These figures validate our Conjecture \ref{conjecture1}. That is, the eigenvalues associated with base ``$30$'' exponential function are closer to zero as compared to the eigenvalues associated with the natural exponential function. Thus, as mentioned earlier, using Cheegers's Inequality (\ref{Cheeger}), base ``$30$'' exponential function should result in better clustering than the natural exponential function based clustering. This turns to be true experimentally, which we demonstrate in the results section.  
	
	Second, we further observe that, as claimed in Section \ref{sec:NewestClusteringAlgo}, eigenvalues corresponding to base ``$30$'' locally scaled exponential function are more closer zero than the prior two function choices. Thus, again by using Cheegers's Inequality (\ref{Cheeger}), this function should give the best clustering. This turns to be true experimentally as well, which we again demonstrate in the results section.  
	
	\begin{figure}[h!]
		\includegraphics[width=0.5\textwidth]{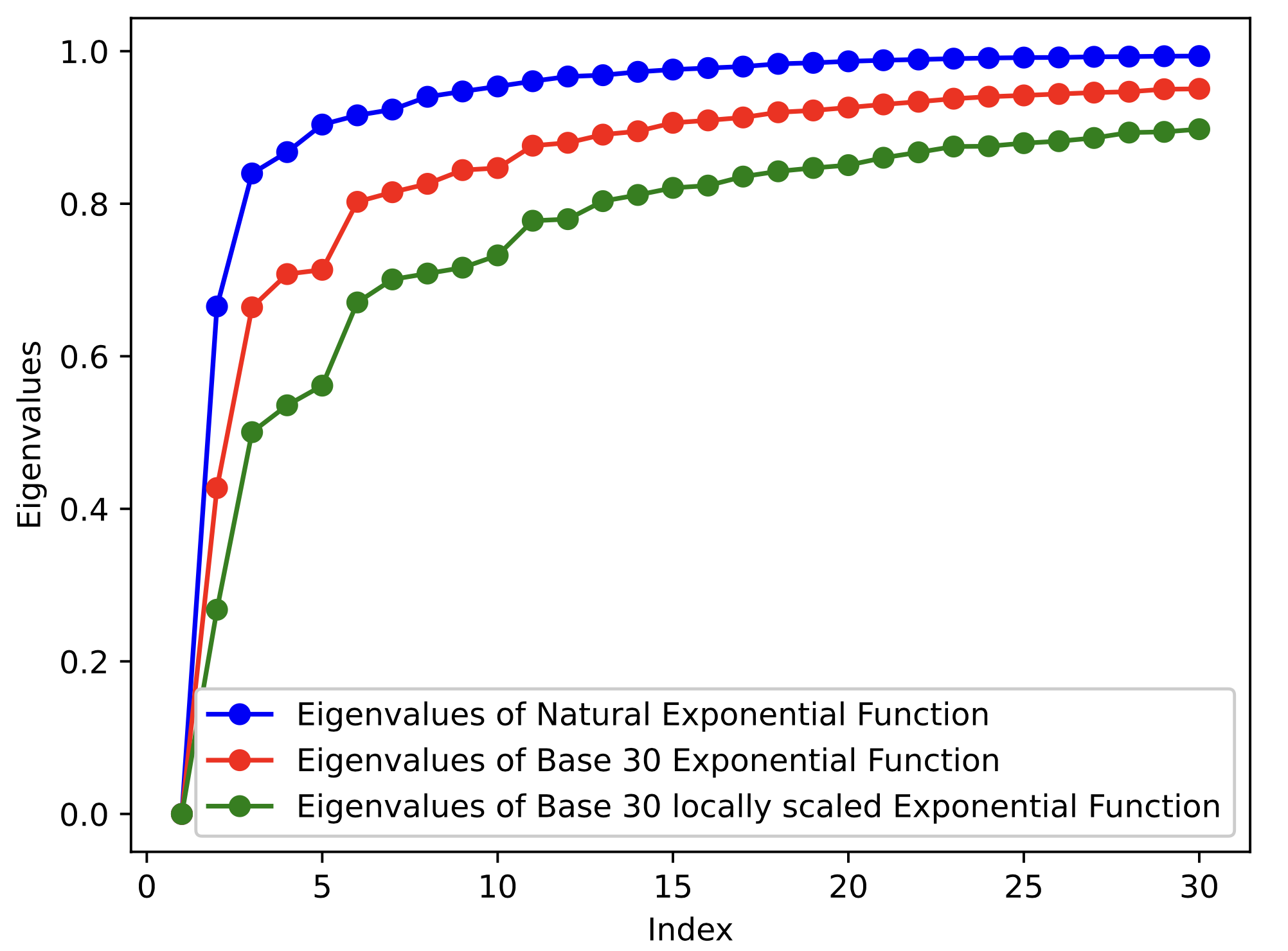}
		\caption{Rice: First 30 eigenvalues obtained using natural exponential function, base ``$30$'' exponential function, and  base ``$30$'' locally scaled exponential function for building the similarity matrix.}
		\label{RiceEigenGap}
	\end{figure}
	\vspace{-0.5cm}
	\section{Results} \label{sec:results}
	
	As discussed in the introduction, we perform experiments on $2376$ soybean and $1865$ rice species. Determining the ideal number of clusters remains an open problem in SC. Based on inputs from plant biologists (which is based on the available number of species of each type) we cluster the soybean data into $10$, $20$, and $30$ groups, and the rice data into $5$, $10$, $15$, and $20$ groups.
	
	To evaluate the quality of clustering we follow the standard definition of Silhoutte Value. This value for data point $p_i$ is given as
	\begin{equation}
		s(p_i)= \frac{b(p_i) -a(p_i)}{\max(a(p_i),b(p_i))},
	\end{equation}
	where $a(p_i)$ denotes the average distance of $p_i$ to the points in its own cluster, while $b(p_i)$ is the average distance of $p_i$ to points in its closest cluster.
		
	\begin{table*}[h]
		\centering
		\small
		\renewcommand{\arraystretch}{1.3}
		\begin{tabular}{|c|c|c|c|c|c|c|c|c|}
			\hline
			&  & Old & Base & New &  & Old SC& Base ``$30$'' & New SC   \\
			Clusters& Distance & SC &  ``$30$'' SC & SC  & HC & Vs  & SC Vs&  Vs   \\
			&  & & & & &  HC \% & HC \% &  HC \%  \\
			\hline

			& Euclidean & 0.2422 &0.2520  & 0.2562&0.2173 & & &    \\ 
			10& SqEuclidean & \textbf{0.3836} & 0.3905  & 0.4066&  \textbf{0.3257}& 17.78&23.42& 33.12 \\ 
			& Correlation & 0.3426 & \textbf{0.4020} & \textbf{0.4336}&0.2307 &  &  &  \\
			\hline

			& Euclidean & 0.2069 & 0.2148&0.0946 & 0.1833&  &&    \\ 
			20& SqEuclidean & \textbf{0.2612} &0.3191& 0.3151 & \textbf{0.2095} &24.69 & 58.8& 63.67 \\ 
			& Correlation & 0.2313&\textbf{0.3327} &\textbf{0.3429} & 0.0598 & & &    \\
			\hline
			
			& Euclidean & \textbf{0.1783} &0.1850 &0.1815 & \textbf{0.1158} & & &    \\ 
			30& SqEuclidean & 0.1538 &0.2588 &\textbf{0.2865} & 0.1086 & 53.97 & 136.01 & 147.41 \\  
			& Correlation & 0.1556 & \textbf{0.2734}&0.2824 & 0.0535 & & &    \\ \hline
			
			\multicolumn{6 }{|c|}{Average percentage gain}       & 32.15  & 72.74 & 81.40        \\ \hline
		\end{tabular}
		\caption{Silhouette values of different clustering algorithms for Soybean.}
		\label{table:without_sampling_soy}
	\end{table*}
	
	\begin{table*}[h]
		\centering
		\small
		\renewcommand{\arraystretch}{1.3}
		\begin{tabular}{|c|c|c|c|c|c|c|c|c|}
			\hline
			&  & Old & Base & New &  & Old SC& Base ``$30$'' & New SC   \\
			Clusters& Distance & SC &  ``$30$'' SC & SC  & HC & Vs  & SC Vs&  Vs   \\
			&  & & & & &  HC \% & HC \% &  HC \%  \\
			\hline
			& Euclidean & 0.1249 & 0.1235&0.1258 & 0.07 & & &    \\ 
			5 & SqEuclidean & 0.215 &0.2214 &0.2242 & 0.1349 & 28.88 & 33.8&34.15  \\ 
			& Correlation & \textbf{0.2200} & \textbf{0.2284}&\textbf{0.2290} & \textbf{0.1707} & & &    \\
			\hline

			& Euclidean & 0.0952 &0.1103 &0.1113 & 0.0662 & & &    \\ 
			10& SqEuclidean & 0.1592 & \textbf{0.1952} & \textbf{0.2002} & 0.1079 & 35.64 &51.55& 55.43 \\ 
			& Correlation & \textbf{0.1747} & 0.1913& 0.1987 & \textbf{0.1288} &  &  &  \\
			\hline

			& Euclidean & 0.0841 & 0.0981&0.0946 & 0.026 &  &&    \\ 
			15& SqEuclidean & 0.1342 &\textbf{0.1714}& \textbf{0.1725} & 0.0677 & 51.55 & 71.23&72.33 \\ 
			& Correlation & \textbf{0.1517}&0.1693 &0.1719 & \textbf{0.1001} & & &    \\
			\hline
			
			& Euclidean & 0.0784 &0.0881 &0.0874 & 0.0128 & & &    \\ 
			20& SqEuclidean & 0.1109 &0.1569 &0.1587 & 0.0432 & 83.35 & 103.15 & 103.40  \\  
			& Correlation & \textbf{0.1454} & \textbf{0.1611}&\textbf{0.1613} & \textbf{0.0793} & & &    \\ \hline
			
			\multicolumn{6 }{|c|}{Average percentage gain}       & 49.86   & 64.93 & 66.33        \\ \hline
		\end{tabular}
		\caption{Silhouette values of different clustering algorithms for Rice.}
		\label{table:without_sampling}
	\end{table*}
	
	Here, we compare four clusterings. First is the standard SC as described in Section \ref{sec:stdSC} (also natural exponential function based SC), and used in \cite{shastri2021probabilistically}. We refer to this as Old SC. Second is our proposed base ``$30$'' exponential function based SC as elaborated in Section \ref{sec:NewClusteringAlgo}. We call this the Base ``$30$'' SC. Third is, again our proposed,  base ``$30$'' locally scaled exponential function based SC as descibed in Section \ref{sec:NewestClusteringAlgo}. We call this New SC. Finally, the fourth is HC, which is mentioned in the literature.
	
	The results of this comparison for soybean and rice are given in Table \ref{table:without_sampling_soy} and \ref{table:without_sampling} respectively. Here, the first column denotes the number of the clusters that are chosen based upon the previous analysis. The second column contains the distance metrics used to build the similarity matrix in the clustering algorithms. Columns three through six list the Silhouette Values of the respective algorithms. Best values in a cell are highlighted in bold. Finally, columns seven through nine give the percentage gain of Old SC, Base ``$30$'' SC, and New SC over HC, respectively. The best values in a cell are used to compute this gain. 
	
	We conclude that the most significant improvement in clustering quality occurs when moving from Old SC to Base ``30” SC with little bit more improvement when going to New SC. That is, for soybean, the gain in these three algorithms over HC is $32.15\%$, $72.74\%$, and $81.40\%$, respectively. In other words, New SC is ${\bf 35\%}$ better than Old SC.
	
	For rice, the gain in these three algorithms over HC is $49.86\%$, $64.93\%$, and  $66.33\%$, respectively. In other words, New SC is ${\bf 11\%}$ better than Old SC. To sum up, New SC yields the best results overall, with soybean showing more significant improvement than rice.	
	
	\section{Conclusion and Future Work} \label{sec:conclusion}
	Phenotypic data of plants is commonly used to group species into different categories, which is further used in breeding programs. Hierarchical Clustering (HC) is a common algorithm that is used for implementing such groupings. Since this algorithm is not very accurate, recently authors in \cite{shastri2021probabilistically} proposed the use of the standard Spectral Clustering (SC) to improve accuracy.  They demonstrated the usefulness of their algorithm via experiments on the soybean plant.
	
	In this work, we propose a novel base ``a'' locally scaled SC that improves the standard SC. {\it First}, using spectral graph theory, specifically the Cheeger’s inequality, we theoretically show that using a base ``a'' exponential function as the similarity function, with increasing base value, leads to improved performance of the SC algorithm. We also integrate our technique with the existing idea of ``local scaling''. {\it Second}, we perform an eigenvalue analysis to support our theoretical result. {\it Third}, using extensive experiments we demonstrate usefulness of our approach. That is,  on $2376$ soybean species and $1865$ rice species, our new algorithm is 35\% and 11\% better than the standard SC, respectively.
	
	\par There are multiple future work directions here. First, in one of the seminal works \cite{ng2001spectral}, the authors have listed sufficiency conditions for SC to work well. It would be very useful to translate those conditions to plant data. Second, it would be useful to experiment with other accurate clusterings (e.g., see \cite{jain2021cube}). Third, although phenotypic characteristics are useful for clustering, genetic data of plant species carries more information. In our earlier work \cite{shastri2019genetic}, we had explored the possibility of using genetic data for clustering and sampling, however, reduced data was used there. It would be interesting to experiment with the full data exhaustively.
	
	Fourth, it would be interesting to improve the existing clustering using mathematical optimisation as in \cite{ahuja2008Mixed} and using approximate computing as in \cite{gupta2020Approx}. Finally, and fifth, implicit relation between phenotypic and genetic data, as done in digital libraries content here \cite{kim2005Explain}, could help in better clustering for both types of data.

\end{document}